\newcommand{\rk}{\operatorname{rk}}
\newcommand{\spn}{\operatorname{span}}
\newcommand{\calP}{\mathcal{P}}
\newcommand{\calS}{\mathcal{S}}
\newtheorem{Satz}{Theorem}[section]
\newtheorem{Thm}[Satz]{Theorem}
\newtheorem{Prop}[Satz]{Proposition}
\newtheorem{Prob}[Satz]{Problem}
\newtheorem{Def}[Satz]{Definition}
\newtheorem{Not}[Satz]{Notations}
\newtheorem{Ass}[Satz]{Assumption}
\newcommand {\vv} {{\mbox{ $v$}}}
\newcommand {\vu} {{\mbox{ $u$}}}
\title{APPROXIMATE RANK-DETECTING FACTORIZATION OF LOW-RANK TENSORS}
\thanks{Copyright 2013 IEEE (tentative, in case of acceptance). Submitted to the IEEE 2013 International Conference on Acoustics, Speech, and Signal Processing (ICASSP 2013), scheduled for 26-31 May 2013 in Vancouver, British Columbia, Canada. Personal use of this material is permitted. However, permission to reprint/republish this material for advertising or promotional purposes or for creating new collective works for resale or redistribution to servers or lists, or to reuse any copyrighted component of this work in other works, must be obtained from the IEEE. Contact: Manager, Copyrights and Permissions / IEEE Service Center / 445 Hoes Lane / P.O. Box 1331 / Piscataway, NJ 08855-1331, USA. Telephone: + Intl. 908-562-3966.}}
\begin{document}
%
\maketitle
\begin{abstract}
  We present an algorithm, AROFAC2, which detects the (CP-)rank of a
  degree $3$ tensor and calculates its factorization into rank-one
  components. We provide generative conditions for the algorithm to
  work and demonstrate on both synthetic and real world data that
  AROFAC2 is a potentially outperforming alternative
  to the gold standard PARAFAC over
  which it has the advantages that it can intrinsically detect the true rank,
  avoids spurious components, and is stable
  with respect to outliers and non-Gaussian noise.
\end{abstract}
\begin{keywords}
Tensor Decomposition, Tensor Factorization, Approximate Algebra, Simultaneous Diagonalization
\end{keywords}
\section{Introduction}
\label{sec:intro}

Polyadic decomposition of tensors into their canonical components (= canonic polyadic resp.~CP-decomposition) and determining the number of those (= the rank) is a multidimensional generalization of the Singular Value Decomposition and the matrix rank, and a reoccurring task in all practical sciences, appearing many times under different names; first discovered by Hitchcock~\cite{Hitchcock1927} and then re-discovered under names such as PARAFAC~\cite{Harshman1970} or CANDECOMP~\cite{CarrollChang1970}, it has been applied in many fields such as chemometrics, psychometrics, and signal processing \cite{Bro1997,Parafac2000,NionSidi2009}. An extensive survey of many applications can be found in \cite{Sidi2004,DeLathauwer:2008}.

Considerable effort has been devoted to develop theory and methodology for the CP-decomposition, however many fundamental issues are still unresolved. The mathematical theory concerning CP-decompositions of tensors which are not matrices is only partly understood; also, while there exist several methods to calculate the CP-decomposition of a tensor~\cite{TomasiBro2006,ND08}, they are extrinsical in the sense that a structure-agnostic loss function is optimized and also highly sensitive to outliers or non-Gaussian noise - problems which have been heuristically attempted to cope with (e.g.~\cite{CEM1208}). Moreover, determining the rank of a noisy tensor remains a problematic task despite the existence of heuristics~\cite{BroKiers2003}.

In this paper, we present AROFAC2, a method for calculating the CP-decomposition of a low-rank degree $3$ tensor and its rank, which is based on theoretical considerations and intrinsical calculations making use of the algebraic structure of degree $3$ tensors, part of which have already surfaced in~\cite{Arofac1}. Specifically, we show how the algebraic structure can be used to obtain components one-by-one by alternating projections - a technique which draws inspiration from \cite{Cardoso92} - and how to reduce determination of rank to a clustering problem. Due to its structure-awareness, our algorithm only finds the numerically stable components while avoiding spurious ones, and determines the correct rank; it is also less sensitive to outliers or noise. We demonstrate its superiority to existing approaches on synthetic data and a chemometrics data set.

\section{Tensor Factorization and Simultaneous SVD}
\label{sec:theory}
We briefly review the basic definitions of rank-one tensor decomposition, and introduce some notation.
\begin{Not}
The set of $(n_1\times n_2\times n_3)$-tensors of degree $3$ is denoted by $\mathbb{C}^{n_1\times n_2\times n_3}.$
For $A\in\mathbb{C}^{n_1\times n_2\times n_3},$ the matrices
$$A_1,\dots,A_{n_3}\quad\mbox{with}\;A_k=(a_{ijk})_{\begin{subarray}{l}
        1\le i\le n_1\\  1\le j\le n_2\end{subarray}}$$
are called the $3$-slices of $A$.
\end{Not}

\begin{Def}\label{Def:tensorrank}
Let $A\in\mathbb{C}^{n_1\times n_2\times n_3}.$ Then, a decomposition
$$A=\sum_{i=1}^r u_i\otimes v_i\otimes w_i\quad\mbox{with}\;u_i\in\mathbb{C}^{n_1},v_i\in\mathbb{C}^{n_2},w_i\in\mathbb{C}^{n_3},$$
is called a rank $r$ canonic polyadic decomposition (or CP-decomposition) of $A$. The $u_i,v_i,w_i$ are called (rank-one-)components of $A$. The $u_i$ are called mode-$1$-, the $v_i$ are called mode-$2$-, and the $w_i$ mode-$3$-components. The tensors $u_i\otimes v_i\otimes w_i$ are called (rank-one-)factors.

The (CP-)rank of $A$, denoted by $\rk (A),$ is the smallest $r$ such that $A$ has a rank $r$ CP-decomposition.
\end{Def}

This paper proposes an algorithmic solution for the following problem:

\begin{Prob}\label{Prob:CPfac}
Let $A\in\mathbb{C}^{n_1\times n_2\times n_3}.$ Determine $r=\rk (A)$ and a rank $r$ CP-decomposition of $A$.
\end{Prob}

and for its approximate version, i.e., the case where $A$ is noisy but of low tensor rank, and one wants to find a CP-decomposition of the noiseless $A$.

Related to that is the problem of finding a simultaneous singular value decomposition (SVD):
\begin{Prob}\label{Prob:simsvd}
Let $A_1,\dots, A_{n_3}\in\mathbb{C}^{n_1\times n_2}.$ Determine a rank $r,$ matrices $U\in\mathbb{C}^{n_1\times r},V\in\mathbb{C}^{n_2\times r}$ and diagonal matrices $W_1,\dots, W_{n_3}\in\mathbb{C}^{r\times r}$ such that
$A_k=U\cdot W_k\cdot V^\top$ 
for all $1\le k\le n_3.$
\end{Prob}

In fact, Problems~\ref{Prob:CPfac} and~\ref{Prob:simsvd} are equivalent. We briefly prove that and add one additional characterization which will be important in the application:
\begin{Prop}\label{Prop:eqprobs}
Let $A\in\mathbb{C}^{n_1\times n_2\times n_3},$ let\\ $A_1,\dots, A_{n_3}\in\mathbb{C}^{n_1\times n_2}$ be the $3$-slices of $A$. Then, the following are equivalent:\\
(i) $A$ is a tensor of rank $r$ in the sense of Definition~\ref{Def:tensorrank}.\\
(ii) The $A_i$ have a simultaneous SVD of rank $r$ in the sense of Problem~\ref{Prob:simsvd}.\\
(iii) There are rank one matrices $M_1,\dots, M_r\in\mathbb{C}^{n_1\times n_2}$ and coefficients $\lambda_{ij}\in\mathbb{C}, 1\le i\le n_3, 1\le j\le r,$ such that
$$A_i=\sum_{j=1}^r \lambda_{ij} M_j\quad\mbox{for all}\;1\le i\le n_3.$$
\end{Prop}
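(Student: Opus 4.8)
The plan is to prove the two equivalences (i) $\Leftrightarrow$ (iii) and (ii) $\Leftrightarrow$ (iii); the common engine is an explicit formula for the $3$-slices of a single rank-one factor. Writing out entries, the factor $u\otimes v\otimes w$ has $(i,j,k)$-entry $u_i v_j w_k$, so its $k$-th $3$-slice is the matrix with $(i,j)$-entry $u_i v_j w_k = w_k\,(uv^\top)_{ij}$; that is, the $k$-th slice of $u\otimes v\otimes w$ equals $w_k\cdot uv^\top$. Summing over a CP-decomposition $A=\sum_{\ell=1}^r u_\ell\otimes v_\ell\otimes w_\ell$ and using linearity of slicing gives $A_k=\sum_{\ell=1}^r (w_\ell)_k\, u_\ell v_\ell^\top$ for every $k$. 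This single identity does essentially all the work.

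For (i)$\Rightarrow$(iii), I read off $M_\ell := u_\ell v_\ell^\top$ (which has rank one as a nonzero outer product) and $\lambda_{k\ell} := (w_\ell)_k$; the slice identity above is then exactly the representation $A_k=\sum_\ell \lambda_{k\ell} M_\ell$ required in (iii), after relabelling the slice index $k$ and component index $\ell$ to the statement's $i,j$. For the converse (iii)$\Rightarrow$(i), I write each rank-one $M_\ell$ as an outer product $M_\ell=u_\ell v_\ell^\top$, define $w_\ell\in\mathbb{C}^{n_3}$ by $(w_\ell)_k:=\lambda_{k\ell}$, and observe that $\sum_\ell u_\ell\otimes v_\ell\otimes w_\ell$ has $k$-th slice $\sum_\ell \lambda_{k\ell}u_\ell v_\ell^\top = A_k$ for all $k$; since a degree-$3$ tensor is determined by its $3$-slices (they contain all of its entries), this tensor equals $A$. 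Both directions use exactly $r$ summands, so they set up a term-preserving correspondence between CP-decompositions and representations of type (iii).

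The equivalence (ii)$\Leftrightarrow$(iii) is then purely a matter of rewriting. Given a simultaneous SVD $A_k=U W_k V^\top$, let $u_\ell,v_\ell$ be the columns of $U,V$ and $\lambda_{k\ell}$ the diagonal entries of $W_k$; expanding the product yields $A_k=\sum_{\ell=1}^r \lambda_{k\ell}\,u_\ell v_\ell^\top$, which is form (iii) with $M_\ell=u_\ell v_\ell^\top$. Conversely, from a representation (iii) I assemble $U$ and $V$ from the factors of each $M_\ell=u_\ell v_\ell^\top$ as columns and set $W_k=\operatorname{diag}(\lambda_{k1},\dots,\lambda_{kr})$ to recover $A_k=U W_k V^\top$. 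It is worth noting that Problem~\ref{Prob:simsvd} imposes no orthonormality on $U,V$, so no normalization or Gram--Schmidt step is needed and the rewriting is exact; again the number of columns of $U$ equals the number of rank-one matrices.

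The only genuinely delicate point is that the statement concerns the specific value $r=\rk(A)$, i.e.\ the \emph{minimal} number of terms, rather than one fixed decomposition. I would handle this by observing that all the correspondences above preserve the number of nonzero summands and send nonzero rank-one objects to nonzero rank-one objects: dropping a vanishing factor in one picture corresponds to dropping a zero column or zero coefficient vector in the others. Hence the minimal admissible $r$ coincides across all three formulations, and (i) holds with a given $r$ exactly when (ii) and (iii) do. I expect no real obstacle here beyond this bookkeeping; the proposition is true essentially by unpacking the definitions through the slice identity.
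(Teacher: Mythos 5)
Your proof is correct and follows essentially the same route as the paper, which also treats all three statements as mere notational repackagings of one another (columns of $U,V$ versus the vectors $u_\ell,v_\ell$, diagonal entries of the $W_k$ versus the coefficients $\lambda_{k\ell}$, outer products $u_\ell v_\ell^\top$ versus the rank-one matrices $M_\ell$); you simply make the slice identity explicit where the paper leaves it implicit. Your final paragraph on minimality of $r$ is a point the paper's proof silently glosses over, and it is handled correctly.
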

\begin{proof}
(i)$\Leftrightarrow$(ii): There is only a notational difference which results from writing the columns of $U$ as $u_i$, the columns of $V$ as $v_i$, and, for $i$ fixed and $k$ running, the vectors formed by the $i$-th diagonal entries of the $W_k$ as $w_i$, and the $A_k$ as the $3$-slices of $A$.\\
(i)$\Leftrightarrow$(iii): Again, the difference is only notational. Write $M_i=u_i\otimes v_i$ and $\lambda_{ij}$ the $i$-th component of $w_j$.
\end{proof}

\section{Uniqueness of CP-decomposition and linear factoring}
\label{sec:uniq}
By the definitions above, it is not a-priori clear whether the CP-decomposition, including terminology such as component or factor, is well-defined, in particular whether it is unique. The method we present in this paper exploits certain properties of tensors with low rank. The main assumption on the tensor $A$ to factor is the following:
\begin{Ass}
$A\in\mathbb{C}^{n_1\times n_2\times n_3},\rk(A)\le \min(n_1,n_2,n_3).$
\end{Ass}
It is probably known or folklore that in this case the CP-decomposition is unique, nevertheless we were not able to retrieve an exact reference. Thus, we provide a proof instead (for the notion of genericity, consult the appendix of \cite{Kir12JMLR}):
\begin{Thm}\label{Thm:uniq}
Let $A\in\mathbb{C}^{n_1\times n_2\times n_3}$ be generic with rank $\rk(A) = r \le \min(n_1,n_2,n_3),$ let
$$A=\sum_{i=1}^r u_i\otimes v_i\otimes w_i\quad\mbox{with}\;u_i\in\mathbb{C}^{n_1},v_i\in\mathbb{C}^{n_2},w_i\in\mathbb{C}^{n_3}$$
be a CP-decomposition of $A$. Then any CP-decomposition of $A$ can be obtained by replacing the $u_i,v_i,w_i$ by\\ $\lambda_iu_i,\nu_iv_i,\lambda_i^{-1}\nu_i^{-1} w_i$, where $\lambda_i,\nu_i\in\mathbb{C}^\times$ for $1\le i\le r.$
\end{Thm}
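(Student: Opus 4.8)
The plan is to translate the statement into a generalized‑eigenvalue (matrix‑pencil) computation via the slice picture of Proposition~\ref{Prop:eqprobs}(iii). Write the given decomposition in factor‑matrix form $A_k = U W_k V^\top$ for $1\le k\le n_3$, where $U = [u_1\,\cdots\,u_r]$, $V = [v_1\,\cdots\,v_r]$, $W_k = \mathrm{diag}((w_1)_k,\dots,(w_r)_k)$, and $W = [w_1\,\cdots\,w_r]\in\mathbb{C}^{n_3\times r}$. First I would record the genericity input: for a generic rank‑$r$ tensor with $r\le\min(n_1,n_2,n_3)$ the matrices $U,V,W$ all have full column rank $r$, since the loci where a determinantal minor vanishes are proper subvarieties. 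The key consequence is that each mode‑$k$ flattening $A_{(k)}$ has rank exactly $r$, so the column space of $A_{(1)}$ is an intrinsic $r$‑dimensional subspace $\mathcal{U}\subseteq\mathbb{C}^{n_1}$ (and likewise $\mathcal{V}\subseteq\mathbb{C}^{n_2}$). For \emph{any} rank‑$r$ decomposition $A=\sum_i u_i'\otimes v_i'\otimes w_i'$ the column space of $A_{(1)}$ is contained in $\spn\{u_i'\}$ and has dimension $r$, which forces $u_1',\dots,u_r'$ to be linearly independent and to span $\mathcal{U}$; similarly for the $v_i'$. Composing on modes $1$ and $2$ with fixed maps $P,Q$ injective on $\mathcal{U},\mathcal{V}$ then lets me assume, uniformly for every rank‑$r$ decomposition, that $n_1=n_2=r$ and that all mode‑$1$ and mode‑$2$ factor matrices are invertible $r\times r$ matrices.

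Next I would build the pencil. Pick $c,d\in\mathbb{C}^{n_3}$ and form the combined slices $B_1=\sum_k c_k A_k = U D_c V^\top$ and $B_2=\sum_k d_k A_k = U D_d V^\top$, where $D_c,D_d$ are diagonal with entries $\langle c,w_i\rangle$ and $\langle d,w_i\rangle$. For generic $c,d$ these are invertible, so $B_1,B_2$ are invertible and the intrinsic matrix $M:=B_1 B_2^{-1} = U(D_c D_d^{-1})U^{-1}$ is diagonalizable: the columns $u_i$ of $U$ are eigenvectors of $M$ with eigenvalues $\lambda_i=\langle c,w_i\rangle/\langle d,w_i\rangle$. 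The decisive point is that $\lambda_i=\lambda_j$ is a nontrivial polynomial condition on $(c,d)$ exactly because $w_i,w_j$ are linearly independent, so for generic $c,d$ the $\lambda_i$ are pairwise distinct. Distinct eigenvalues make the eigenspaces of $M$ one‑dimensional and hence uniquely determined by $A$, so the lines $\mathbb{C}u_i$ are intrinsic. Transposing the slices produces $M':=B_1^\top(B_2^\top)^{-1}=V(D_c D_d^{-1})V^{-1}$, with the same eigenvalues but eigenvectors $v_i$, so the lines $\mathbb{C}v_i$ are intrinsic too.

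To conclude, I would apply the identical computation to an arbitrary rank‑$r$ decomposition, whose factor matrices are invertible by the first step. Its mode‑$1$ components $u_i'$ are eigenvectors of the same $M$ and its mode‑$2$ components $v_i'$ are eigenvectors of the same $M'$, so distinctness forces $u_i'=\lambda_i u_{\sigma(i)}$ and $v_i'=\nu_i v_{\tau(i)}$ for scalars $\lambda_i,\nu_i\in\mathbb{C}^\times$ and permutations $\sigma,\tau$. Reading the eigenvalue attached to the $i$‑th component from $M$ gives $\lambda_{\sigma(i)}$ and from $M'$ gives $\lambda_{\tau(i)}$; since these coincide and the $\lambda$'s are distinct, $\sigma=\tau$, so after relabeling the two decompositions pair up index by index. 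Finally, $\{u_i\otimes v_i\}_{i=1}^r$ are linearly independent in $\mathbb{C}^{n_1\times n_2}$ (from independence of the $u_i$ and $v_i\neq0$), so comparing $A_k=\sum_i (w_i)_k\,u_i\otimes v_i=\sum_i (w_i')_k\,\lambda_i\nu_i\,u_i\otimes v_i$ coefficientwise determines the mode‑$3$ components, yielding $w_i'=\lambda_i^{-1}\nu_i^{-1}w_i$, exactly the asserted form.

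I expect the main obstacle to be the genericity bookkeeping of the first step: verifying that \emph{every} competing rank‑$r$ decomposition inherits full‑rank (hence, after reduction, invertible) factor matrices so that the pencil argument applies uniformly, and confirming that the separation $\lambda_i\neq\lambda_j$ really is a generic condition. Once these are secured, the generalized‑eigenvalue computation is routine.
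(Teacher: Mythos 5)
Your proof is correct, but it takes a genuinely different route from the paper's. The paper stays entirely inside the slice-span picture of Proposition~\ref{Prop:eqprobs}(iii): writing $\tilde A=\Lambda M$ with the slices $A_i$ as rows of $\tilde A$ and the rank-one matrices $M_j=u_j\otimes v_j$ as rows of $M$, it observes that any competing decomposition yields $M'=PM$ for an invertible $(r\times r)$ matrix $P$, and then eliminates all nontrivial $P$ with one algebraic fact: because the $M_j$ are generic rank-one matrices and $r\le\min(n_1,n_2)$, a combination $\sum_j p_{ij}M_j$ has rank one if and only if exactly one coefficient is nonzero, so $P$ must be a permutation times an invertible diagonal matrix. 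Your argument is instead the classical Jennrich/Harshman simultaneous-diagonalization (matrix-pencil) proof: reduce to invertible $r\times r$ factor matrices, form two generic slice combinations $B_1,B_2$, and recover the component lines as the eigenspaces of $B_1B_2^{-1}$ and $B_1^\top(B_2^\top)^{-1}$, which have pairwise distinct eigenvalues for generic $(c,d)$; matching eigenvalues then forces the two permutations to coincide, and linear independence of the $u_i\otimes v_i$ pins down the mode-$3$ scalings. Both arguments are sound, and your handling of the delicate points is right: invertibility of $D_c',D_d'$ for the competing decomposition follows intrinsically from invertibility of $B_1,B_2$, and pairwise independence of the $w_i$ (hence eigenvalue separation) follows from the mode-$3$ flattening having rank $r\le n_3$. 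What each approach buys: the paper's route is shorter and directly furnishes Proposition~\ref{Prop:arofac} --- that the only rank-one elements of the slice span are the factors themselves --- which is precisely the fact the AROFAC2 iteration exploits; your route is constructive (it is essentially Jennrich's algorithm), makes explicit the bookkeeping that every competing decomposition has full-rank factor matrices spanning the same intrinsic subspaces (a point the paper passes over quickly when asserting that the $M_j$ lie in the span of the $A_i$), and isolates sufficient hypotheses weaker than blanket genericity, namely full column rank of $U,V$ together with pairwise linear independence of the $w_i$.
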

\begin{proof}
Keep the notation from Proposition~\ref{Prop:eqprobs}, including $A_i$ and $M_j$. Due to the observation in the proof of Proposition~\ref{Prop:eqprobs} that (i) and (iii) differ only in notation, the statement of this theorem is equivalent to proving that there is only one unique way to present the $A_i$ as
$$A_i=\sum_{j=1}^r \lambda_{ij} M_j\quad\mbox{for all}\;1\le i\le n_3.$$
with rank one matrices $M_j$ and numbers $\lambda_{ij}$, up to renumbering and the obvious rescaling by replacing $M_j$ with $\mu_jM_j$ and $\lambda_{ij}$ with $\mu_j^{-1} M_j.$ Now since $A$ is generic, the $M_j$ are generic rank-one matrices, and the $\lambda_{ij}$ are generic numbers. Thus, interpreting the $A_i$ as rows of a $(n_3\times n_1n_2)$-matrix $\tilde{A}$, the $M_j$ as rows of a $(r\times n_1n_2)$-matrix $M$, and the $\lambda_{ij}$ as elements of a $(n_3\times r)$-matrix $\Lambda$. The presentation above can be reformulated as $\tilde{A}=\Lambda M.$ The assumption $n_3\le r$ and Proposition~\ref{Prop:eqprobs} (iii) thus imply that the $M_j$ lie in the span of the $A_i$. Now since the $\lambda_{ij}$ are generic, $\Lambda$ is a completely generic matrix. Thus, a different presentation of $\tilde{A}$ would correspond to the existence of an invertible $(r\times r)$ matrix $P$ such that $\tilde{A}=\Lambda P^{-1} M'$ with a matrix $M'$ whose rows correspond to rank-one-matrices, i.e., $M'=PM.$ But a linear combination
$$M'_i=\sum_{i=1}^{r} p_{ij} M_j$$
of the $(n_1\times n_2)$ matrices $M_j$ has rank one if and only if exactly one of the $p_{ij}$ (with $i$ fixed) is non-zero, since $r\le \min (n_1,n_2)$. Since $P$ is of full rank, this implies that $P$ is the product of a $(r\times r)$ permutation matrix with a full rank diagonal $(r\times r)$ matrix. But this is, as stated above, equivalent to the statement to prove.
\end{proof}

Thus, under our assumptions components are unique up to scaling and numbering, and factors are unique up to numbering.

Proposition~\ref{Prop:eqprobs}, together with the uniqueness guarantee in Theorem~\ref{Thm:uniq}, gives the following statement:
\begin{Prop}\label{Prop:arofac}
Let $A\in\mathbb{C}^{n_1\times n_2\times n_3}$ be generic with rank $\rk(A) = r \le n_3,$ let $A_1,\dots, A_{n_3}\in\mathbb{C}^{n_1\times n_2}$ be the $3$-slices of $A$. Then, up to scaling, there are $r$ unique vectors $\lambda^{(1)},\dots,\lambda^{(r)}\in\mathbb{C}^{n_3}$ such that
$$M(\lambda^{(k)})=\sum_{i=1}^{n_3}\lambda_i^{(k)} A_i$$
has rank one. Moreover, for each $k$, let $M(\lambda^{(k)})=u_k\otimes v_k$ be the (rank-one-)SVD. Then, the $u_k$ and $v_k$ are mode-$1$- and mode-$2$-components of $A$, belonging to the same factor.
\end{Prop}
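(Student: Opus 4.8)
The plan is to reduce the rank-one condition to the coefficient characterization of Proposition~\ref{Prop:eqprobs}(iii) and then recycle the "generic rank-one combination" fact established inside the proof of Theorem~\ref{Thm:uniq}. First I would invoke Proposition~\ref{Prop:eqprobs}(iii) to fix generic rank-one matrices $M_1,\dots,M_r$ with $M_j=u_j\otimes v_j$ (the mode-$1$/mode-$2$ components) and scalars $\lambda_{ij}$ such that $A_i=\sum_{j=1}^r\lambda_{ij}M_j$. Collecting the $\lambda_{ij}$ into the $(n_3\times r)$ matrix $\Lambda$, every slice combination expands in the fixed basis $M_1,\dots,M_r$ as
\[
M(\lambda)=\sum_{i=1}^{n_3}\lambda_i A_i=\sum_{j=1}^r\Big(\sum_{i=1}^{n_3}\lambda_i\lambda_{ij}\Big)M_j=\sum_{j=1}^r c_j(\lambda)\,M_j,\qquad c(\lambda)=\Lambda^\top\lambda\in\mathbb{C}^r.
\]
This converts the rank condition on $M(\lambda)$ into a condition on the coefficient vector $c(\lambda)$ relative to a fixed generic rank-one basis.

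Next I would apply the key observation from the proof of Theorem~\ref{Thm:uniq}: since the $M_j$ are generic rank-one $(n_1\times n_2)$ matrices and $r\le\min(n_1,n_2)$, a combination $\sum_j c_jM_j$ has rank one if and only if exactly one $c_j$ is nonzero. Hence $M(\lambda)$ has rank one precisely when $\Lambda^\top\lambda=c\,e_k$ for some index $k$ and some $c\neq 0$, where $e_k$ is the $k$-th standard basis vector. For each such $\lambda$ this forces $M(\lambda)=c\,M_k=c\,u_k\otimes v_k$, so its rank-one SVD returns exactly $u_k$ and $v_k$ up to scaling, i.e.\ the two components of the single $k$-th factor $u_k\otimes v_k\otimes w_k$. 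This already delivers the ``moreover'' part and shows there are at most $r$ admissible rank-one targets, indexed by $k=1,\dots,r$.

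The genuine obstacle is the uniqueness/count claim, that up to scaling there are exactly $r$ vectors $\lambda^{(k)}$; this amounts to solving $\Lambda^\top\lambda=c\,e_k$ for $\lambda$. Because $A$ is generic, $\Lambda$ is a generic $(n_3\times r)$ matrix of full column rank $r$. When $n_3=r$ the matrix $\Lambda^\top$ is invertible, so $\lambda^{(k)}=c\,(\Lambda^\top)^{-1}e_k$ is determined uniquely up to the scalar $c$; the $r$ resulting rays are pairwise distinct because they are spanned by the $r$ linearly independent columns of $(\Lambda^\top)^{-1}$, giving precisely $r$ solutions up to scaling. I would flag that for $n_3>r$ the preimage $(\Lambda^\top)^{-1}(\mathbb{C}e_k)$ is instead $(n_3-r+1)$-dimensional, so literal uniqueness of $\lambda^{(k)}$ fails; however this residual freedom lies in $\ker\Lambda^\top$ and every representative yields the \emph{same} $M(\lambda^{(k)})=c\,M_k$ up to scaling, hence the same factor, so component extraction is unaffected. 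Exact uniqueness is recovered by first restricting to a basis of the (generically $r$-dimensional) span of the slices, which replaces $\Lambda$ by an invertible $(r\times r)$ block and returns us to the clean $n_3=r$ situation.
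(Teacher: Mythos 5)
Your proposal is correct and is essentially a detailed unwinding of the paper's own (very terse) proof: the paper gives no separate argument for Proposition~\ref{Prop:arofac}, stating only that it follows from Proposition~\ref{Prop:eqprobs} together with Theorem~\ref{Thm:uniq}, and your reduction --- expanding $M(\lambda)$ in the basis of rank-one matrices $M_j$ from Proposition~\ref{Prop:eqprobs}(iii) and invoking the lemma inside the proof of Theorem~\ref{Thm:uniq} that a combination $\sum_j c_j M_j$ of generic rank-one matrices has rank one iff exactly one $c_j\neq 0$ (using $r\le\min(n_1,n_2)$ from the standing assumption) --- is exactly that route made explicit. Your flag about the case $n_3>r$ is moreover a genuine and correct refinement rather than a gap in your own argument: there the solution set $\{\lambda:\Lambda^\top\lambda\in\mathbb{C}^\times e_k\}$ has dimension $n_3-r+1$, so the stated uniqueness of $\lambda^{(k)}$ up to scaling literally fails and holds only modulo $\ker\Lambda^\top$ (equivalently, after passing to an $r$-dimensional representation of $\spn\langle A_1,\dots,A_{n_3}\rangle$, which is what Algorithm~\ref{Alg:findrkone} does in practice); the paper glosses over this point entirely.
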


\section{The AROFAC2 Algorithm}
\label{sec:algo}

We propose an algorithm which computes rank and CP-decomposition of a degree $3$ tensor $A$. It uses the routine \texttt{FindRankOne} which finds mode-$1$- and -$2$-components which we will present first as Algorithm~\ref{Alg:findrkone}. In step~\ref{Alg:findrkone.step1}, the tensor is first decomposed into $3$-slices $A_i$. In step~\ref{Alg:findrkone.step2}, an approximate representation $V$ for their span is calculated. This can be a PCA of the $A_i$, i.e., principal values or components, or a numerical span of lower dimension (if the true rank of $A$ is known, the dimension should equal the rank). In step~\ref{Alg:findrkone.step3} a matrix $M\in V$ is randomized. This can be a random matrix in an exact span, or a matrix which is, e.g., sampled from a Gaussian with covariance matrix follows the estimated sample distribution of the $A_i$. Then steps~\ref{Alg:findrkone.step4} and~\ref{Alg:findrkone.step5},  are repeated until convergence of $M$ is attained. Step~\ref{Alg:findrkone.step4} takes (a possibly non-square) $M$ to its third power, magnifying its largest singular value and diminishing the others. Step~\ref{Alg:findrkone.step5} projects $M$ onto $V$ and normalizes the result. Projection can be achieved by exact projection, or re-scaling, e.g., according to the principal values of the $A_i$. After convergence is reached, $M$ will be approximately of rank one, and its first singular vectors, which are obtained in step~\ref{Alg:findrkone.step6}, will be estimates for a mode-$1$- and a mode-$2$-component of $A$.

\begin{algorithm}[ht]
\caption{\label{Alg:findrkone}
\texttt{FindRankOne}$(A)$
{\it Input: $A\in\mathbb{C}^{n_1\times n_2\times n_3}$}
{\it Output: One random mode-$1$-component $u\in\mathbb{C}^{n_1}$ and one random mode-$2$-component $v\in\mathbb{C}^{n_2}$ of $A$ in the same factor.}
}
\begin{algorithmic}[1]

    \State \label{Alg:findrkone.step1} Let $A_1,\dots, A_{n_3}$ be the $3$-slices of $A$.
    \State \label{Alg:findrkone.step2} Calculate an approximate representation $V$ of $\spn \langle A_1,\dots, A_{n_3}\rangle.$
    \State \label{Alg:findrkone.step3} Randomize $M\in V.$
    \Repeat
    \State \label{Alg:findrkone.step4} $M\leftarrow M\cdot M^\top\cdot M$
    \State \label{Alg:findrkone.step5} $M\leftarrow \calP_V (M)$
    \Until $M$ has converged
    \State \label{Alg:findrkone.step6} Calculate approximate rank-one SVD of $M=u\cdot v^\top$
    \State \label{Alg:findrkone.step7} Return $u,v$
\end{algorithmic}
\end{algorithm}

Algorithm~\ref{Alg:AROFAC2} uses Algorithm~\ref{Alg:findrkone} to obtain a full CP-decomposition and an estimate for the rank of $A$. In step~\ref{Alg:AROFAC2.step1}, several candidate estimates for components of all modes are obtained. Mode-$3$-components can be obtained by switching coordinates in $A$ (e.g., switch the first with the third). If the mode-$3$-components are not relevant for the problem at hand, e.g., in the setting of simultaneous SVD as in Problem~\ref{Prob:simsvd}, this can be omitted. In steps~\ref{Alg:AROFAC2.step2} and step~\ref{Alg:AROFAC2.step3}, a clustering algorithm is applied to estimate the number of cluster centers and cluster the candidate components. This can be done by different methods, or one single algorithm. Our implementation uses the mean shift algorithm which also estimates the number of cluster centers \cite{MeanShift02}. Since Algorithm~\ref{Alg:findrkone} links pairs of components, this information can then be applied in step~\ref{Alg:AROFAC2.step4} to the estimated cluster centers in order to link pairs to full triples (which is unnecessary if only the first two modes are considered), e.g., by majority vote or vote weighted by closeness of a pair to the cluster center. The corresponding decomposition is then presented as the estimated solution in step~\ref{Alg:AROFAC2.step5}.

\begin{algorithm}[ht]
\caption{\label{Alg:AROFAC2}
\texttt{AROFAC2}$(A)$\quad
{\it Input: $A\in\mathbb{C}^{n_1\times n_2\times n_3}$}\newline
{\it Output: Rank and approximate CP-decomposition of $A$.}
}
\begin{algorithmic}[1]
    \State \label{Alg:AROFAC2.step1} Repeat \texttt{FindRankOne}$(A)$ to find sets $\calS_1,\calS_2,\calS_3$ of potential mode-$1$-,-$2$-, and -$3$-components of $A$.
    \State \label{Alg:AROFAC2.step2} Use clustering algorithm to determine number $r$ of clusters for $\calS_1,\calS_2,$ and $\calS_3$
    \State \label{Alg:AROFAC2.step3} Cluster $\calS_1,\calS_2,$ and $\calS_3$ to obtain cluster centers $u_1,\dots, u_r$ of $\calS_1$, centers $v_1,\dots, v_r$ of $\calS_2$ and $w_1,\dots, w_r$ of $\calS_3.$
    \State \label{Alg:AROFAC2.step4} Use the information from \texttt{FindRankOne}$(A)$ to renumber the $u_i,v_j,w_k$ such that for each $\ell$, the components $u_\ell,v_\ell, w_\ell$ belong to the same factor.
    \State \label{Alg:AROFAC2.step5} Return $r$ as the rank of $A$ and $A=\sum_{i=1}^r u_i\otimes v_i\otimes w_i$ as its CP-decomposition.
\end{algorithmic}
\end{algorithm}

\section{Experiments}
First, we demonstrate our algorithm on simulated toy data.
The input tensor consists of $3$-slices
$A_k=\sum_{i=1}^r \lambda_{ik}\vu_i\vv_i^\top$
compare Proposition~\ref{Prop:eqprobs}.
Each slice is generated as follows: Exact singular vectors $\vu_i\in\mathbb{R}^{n_1}, \vv_i\in\mathbb{R}^{n_2}, 1\le i\le r$
are sampled independently and uniformly from the $n_1$-sphere and $n_2$-sphere, respectively.
The $\lambda_{ik}$ are sampled independently and uniformly from the standard normal distribution. Then, to each matrix $A_k,  1\le k\le n_3 $, noise is added in the form of a $(n_1\times n_2)$ matrix whose entries are independently sampled from a normal distribution with mean $0$ and covariance $\varepsilon\in\mathbb{R}^+.$
Figure \ref{fig:res_pm} shows the accuracy of the estimated $U$ and $V$ for $n_1=50 , n_2=60 , n_3=70 , r=10 , \varepsilon=0.1$ for PARAFAC  (top row) vs AROFAC2 (bottom row).
The estimation quality is essentially the same, however for AROFAC2  the correct
rank $r=10$ has been detected automatically.

\begin{figure}[htb]
     \begin{center}
        \subfigure[estimated coefficients]{%
  \label{fig:res_pm}
\includegraphics[width=3.5cm]{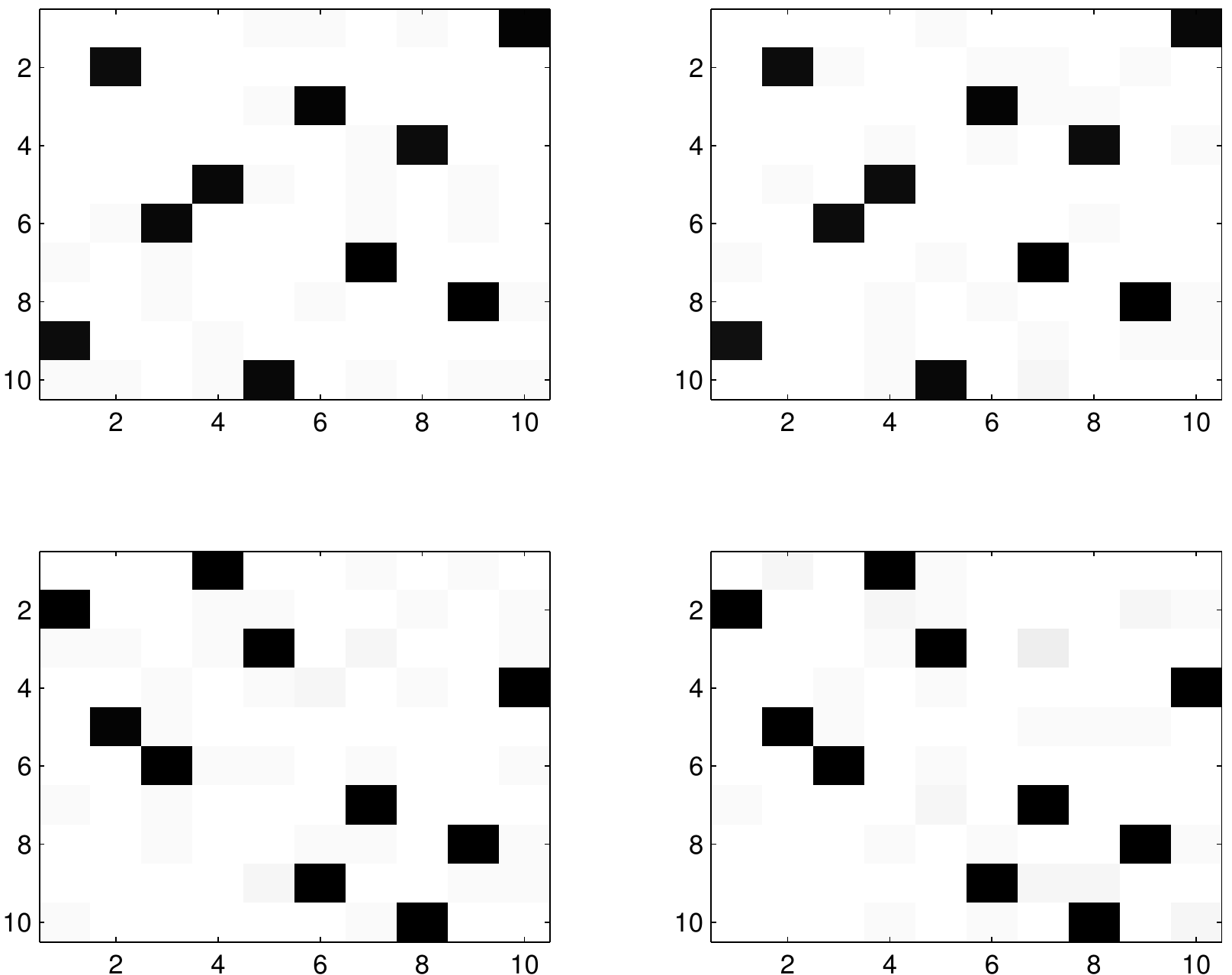}
        }~\subfigure[estimated rank]{%
  \label{fig:rank_est}
\includegraphics[width=4cm]{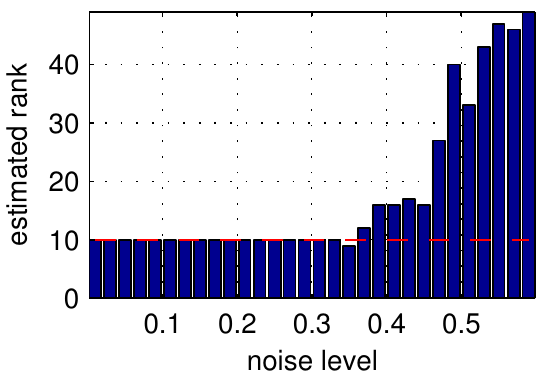}
        }

    \end{center}
  \caption{
(a) Absolute values of the correlation coefficients between the true and estimated components for PARAFAC (top row) vs AROFAC2 (bottom row). The correct solution has the same support as a permutation matrix.\;
(b) Estimated rank for increasing noise levels $\varepsilon$. True rank is $r=10$.
  }
   \label{fig:synthexps}
\end{figure}

%
In a second numerical evaluation, we analyze the noise robustness of the AROFAC2 algorithm. The data is generated as before, but the noise level  $\varepsilon$ is increased from $\varepsilon=0.01$ to  $\varepsilon=0.6$. In Figure \ref{fig:rank_est} we see, that the correct rank $r=10$ has been found for  $\varepsilon\le0.35$ and is mainly overestimated for larger noise levels.


Finally we apply our algorithm to a publicly available data set from
chemometrics.  The {\bf Dorrit fluorescence data} \cite{dorrit,RiuBro}
contains 27 synthetic samples of different mixtures of four analytes
(hydroquinone, tryptophan, phenylalanine and DOPA) that were measured
in a Perkin-Elmer LS50 B fluorescence spectrometer. The measurements
of emission spectra at multiple excitation wavelengths give rise to an
excitation-emission matrix (EEM) for each sample and thus form a degree 3
tensor which is known to obey the trilinear model where the rank is
determined by the number of fluorophores \cite{dorrit,RiuBro,CEM1208}.
As described in \cite{Engelen2011} this data set is highly suited to
assess the performance of different methods due to its realistic noise from the physical environment
and the availability of a priori knowledge of the underlying components.
Figure \ref{fig:res} shows the estimated emission and excitation
spectra for PARAFAC with 4 and 5 components and AROFAC2 with
auto-detected 5 components.
We note that the results of AROFAC2 are in excellent agreement with the known spectra (cf. Figs.~2 and 3 in  \cite{CEM1208}) while the components found by PARAFAC lack accuracy. In addition we observe that the AROFAC2 loadings better fulfill the non-negativity constraints even though they were not enforced explicitly.
Furthermore, the peak of the fifth component around 315 nm in both excitation and emission spectra which can be attributed to Rayleigh scatter in all samples \cite{Engelen2007,CEM1208} is sharper and thus in better agreement with its expected shape when identified by AROFAC2.

\begin{figure}[htb]
\begin{minipage}[b]{1.0\linewidth}
 \centerline{\includegraphics[width=8.0cm]{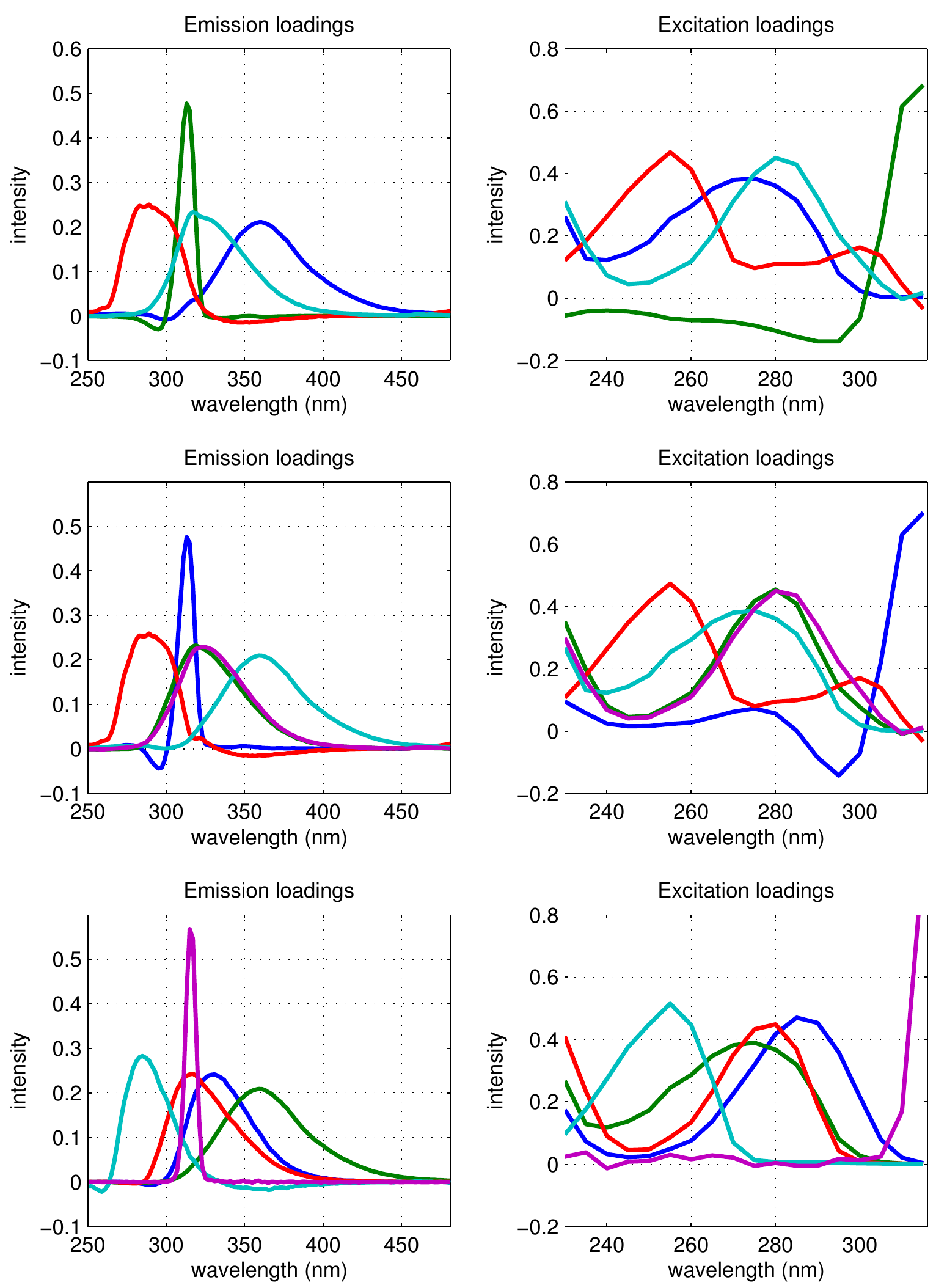}}

\end{minipage}
\caption{Emission (left) and excitation (right) spectra for the Dorrit data using PARAFAC with $r=4$ (top row)  PARAFAC with $r=5$ (middle row) and AROFAC2 with  $r=5$ automatically detected (bottom row). }
\label{fig:res}
\end{figure}


\section{Conclusion}
\label{sec:concl}

With AROFAC2, we have presented an algorithm which can determine the CP-decomposition and the rank of a potentially noisy degree three tensor. We argue that due to how the algorithm is constructed, spurious and unstable components in the decomposition are not found; since the convergence criterion intrinsically enforces stability and thus informativeness of any found component. Our simulations demonstrate that AROFAC2 is competitive to the state-of-the-art method PARAFAC, but without the need to provide the true rank in advance. Also, AROFAC2 outperforms PARAFAC on the Dorrit data set which shows that AROFAC2 is a method which is more stable to outliers and the influence of non-Gaussian noise.

AROFAC2 uses the intrinsic algebraic structure of a low-rank degree tensor in the calculations, as opposed to most standard methods such as PARAFAC which assume a model and try to fit it, agnostic of its inner structure. We thus argue that algorithms exploiting this structure are to prefer whenever available, and the proper starting point for any method approaching any problem with algebraic features. We emphasize the potential benefit from applying structural insights to construct structure-aware methods.


\newpage


\nocite{*}
\bibliography{icassp13}		

\bibliographystyle{IEEEbib}


\end{document}